\newtheorem{theorem}{Theorem}
\theoremstyle{definition}
\newtheorem{definition}{Definition}[section]
\newtheorem{lemma}{Lemma}
\newcommand{\vx}{\mathbf{x}}
\newcommand{\vz}{\mathbf{z}}
\newcommand{\vm}{\mathbf{m}}
\newcommand{\vr}{\mathbf{r}}
\newcommand{\vq}{\mathbf{q}}
\newcommand{\veta}{\boldsymbol{\eta}}
\newcommand{\mechanism}{\mathcal{M}}
\newcommand{\dataset}{\mathcal{D}}
\newcommand{\R}{\mathbb{R}}
\newcommand{\E}{\operatorname{E}}
\newcommand{\Var}{\operatorname{Var}}
\title{Differentially Private Bayesian Learning on Distributed Data}
\author{\textbf{Mikko Heikkil\"a}$^1$, \textbf{Eemil Lagerspetz}$^2$, \textbf{Samuel Kaski}$^{3}$, \textbf{Kana Shimizu}$^4$, \textbf{Sasu Tarkoma}$^2$ and \textbf{Antti Honkela}$^{1,2,5}$\\
  {$^1$ Helsinki Institute for Information Technology (HIIT), Department of Mathematics and Statistics, University of Helsinki} \\
  {$^2$ HIIT, Department of Computer Science, University of Helsinki} \\
  {$^3$ HIIT, Department of Computer Science, Aalto University}\\
  {$^4$ Department of Computer Science and Engineering, Waseda University} \\
  {$^5$ Department of Public Health, University of Helsinki}
  }
\begin{document}

\maketitle

\begin{abstract}
  Many applications of machine learning, for example in health care,
  would benefit from methods that can guarantee privacy of data subjects.
  Differential privacy (DP) has become established as a standard for
  protecting learning results. The standard DP algorithms require a
  single trusted party to have access to the entire data, which is a
  clear weakness. 
  We consider DP Bayesian learning in a distributed setting, where
  each party only holds a single sample or a few samples of the data. 
  We propose a learning strategy based on a secure multi-party
  sum function for aggregating summaries from data holders and the Gaussian 
  mechanism for DP.  
  Our method builds on an asymptotically optimal and practically efficient
  DP Bayesian inference with rapidly diminishing extra cost.
\end{abstract}

\section{Introduction}

Differential privacy (DP) \citep{dwork_roth_2014,dwork_et_al_2006} has
recently gained popularity as the theoretically best-founded means
of protecting the privacy of data subjects in machine learning. It
provides rigorous guarantees against breaches of individual
privacy that are robust even against attackers with access to
additional side information.
DP learning methods have been
proposed e.g.\ for maximum likelihood estimation \citep{Smith2008},
empirical risk minimisation \citep{Chaudhuri_2011} and Bayesian
inference
\citep[e.g.][]{Dimitrakakis2013,Foulds2016,Honkela_2016,Jalko_2016, Park_2016, Wang2015ICML,Zhang2015}.
There are DP versions of most popular machine learning methods
including linear regression \citep{Honkela_2016,Zhang2012}, logistic
regression \citep{Chaudhuri_2008}, support vector machines
\citep{Chaudhuri_2011}, and deep learning \citep{Abadi2016}.

Almost all existing DP machine learning methods assume that some trusted
party has unrestricted
access to all data in order to add the necessary amount of noise
needed for the privacy guarantees. This is a highly restrictive
assumption for many applications and creates huge privacy risks
through a potential single point of failure. 

In this paper we introduce a general strategy for DP
Bayesian learning in the distributed setting with minimal overhead.
Our method builds on the asymptotically optimal sufficient statistic
perturbation mechanism \citep{Foulds2016,Honkela_2016} and shares its
asymptotic optimality.
The method is based on a DP secure multi-party communication (SMC)
algorithm, called Distributed Compute algorithm (DCA), 
for achieving DP in the distributed
setting.  We demonstrate good performance of the method on DP 
Bayesian inference using linear regression
as an example.

\section{Our contribution}

We propose a general approach for privacy-sensitive learning in the 
distributed setting. Our approach combines SMC with 
DP Bayesian learning methods, originally introduced for the 
trusted aggregator setting, to achieve DP Bayesian 
learning in the distributed setting.

To demonstrate our framework in practice, we combine the Gaussian
mechanism for $(\epsilon,\delta)$-DP with efficient DP Bayesian
inference using sufficient statistics perturbation (SSP) and an
efficient SMC approach for secure distributed computation of the
required sums of sufficient statistics.  We prove that the Gaussian
SSP is an efficient $(\epsilon,\delta)$-DP Bayesian inference method
and that the distributed version approaches this quickly as the number
of parties increases.  We also address the subtle challenge of
normalising the data privately in a distributed manner, required for
the proof of DP in distributed DP learning. 

\section{Background}

\subsection{Differential privacy}
\label{sec:DP}

Differential privacy (DP) \cite{dwork_et_al_2006}
gives strict, mathematically rigorous guarantees against
intrusions on individual privacy. A randomised algorithm
is differentially private (DP) if its results on adjacent data
sets are likely to be similar.
Here adjacency means that the data sets differ by a single element,
i.e., the two data sets have the same number of samples, but they
differ on a single one. In this work we utilise a relaxed version of
DP called $(\epsilon, \delta)$-DP
\cite[Definition 2.4]{dwork_roth_2014}.

\begin{definition}
\label{def:DP}
A randomised algorithm $\mathcal{A}$ is $(\epsilon, \delta)$-DP,
if for all $\mathcal{S} \subseteq$ Range $(\mathcal{A})$ and all 
adjacent data sets $D,D'$,
\begin{equation*}
P(\mathcal{A}(D) \in \mathcal{S}) \leq \exp ( \epsilon ) P(\mathcal{A}(D') \in \mathcal{S}) + \delta.
\end{equation*}
\end{definition}

The parameters $\epsilon$ and $\delta$ in Definition \ref{def:DP} control 
the privacy guarantee: $\epsilon$ tunes the amount of
privacy (smaller $\epsilon$ means stricter privacy), while $\delta$
can be interpreted as the proportion of probability space where the
privacy guarantee may break down.

There are several established mechanisms for ensuring DP. We use the
Gaussian mechanism
\cite[Theorem 3.22]{dwork_roth_2014}.
The theorem says that given a numeric query $f$ with
$l_2$-sensitivity $\Delta_2(f)$, adding noise distributed as $N(0,
\sigma^2)$ to each output component guarantees DP, when 
\begin{equation}
  \sigma^2 > 2\ln (1.25/\delta) (\Delta_2(f)/\epsilon)^2.\label{eq:gauss_sigma}
\end{equation}
Here, the $l_2$-sensitivity of a function $f$ is defined as
\begin{equation}
\Delta_2 (f) = \underset{\underset{|| D-D' ||_1=1 }{D, D'}}{\sup} ||f(D) - f(D') ||_2.
\end{equation}

\subsection{Differentially private Bayesian learning}
\label{sec:DP_Bayesian_learning}

Bayesian learning provides a natural complement to DP because it inherently
can handle uncertainty, including uncertainty introduced to ensure DP
\citep{Williams2010}, and it provides a flexible framework for data
modelling.

Three distinct types of mechanisms for DP Bayesian inference have been
proposed:
\begin{enumerate}
\item Drawing a small number of samples from the posterior or an
  annealed posterior \citep{Dimitrakakis2014,Wang2015ICML};
\item Sufficient statistics perturbation (SSP) of an exponential family
  model \citep{Foulds2016,Honkela_2016,Park_2016}; and
\item Perturbing the gradients in gradient-based MCMC
  \cite{Wang2015ICML} or optimisation in variational
  inference \citep{Jalko_2016}.
\end{enumerate}
For models where it applies, the SSP
approach is asymptotically efficient \citep{Foulds2016,Honkela_2016},
unlike the posterior sampling mechanisms.  The efficiency proof of
\cite{Honkela_2016} can be generalised to $(\epsilon,\delta)$-DP and
Gaussian SSP as shown in the Supplementary Material.

The SSP (\#2) and gradient perturbation
(\#3) mechanisms are of similar form in that the DP mechanism
ultimately computes a perturbed sum
\begin{equation}
  \label{eq:perturbed_sum}
  z = \sum_{i=1}^N z_i + \eta
\end{equation}
over quantities $z_i$ computed for different samples $i = 1, \dots,
N$, where $\eta$ denotes the noise injected to ensure the DP
guarantee.  For SSP
\citep{Foulds2016,Honkela_2016,Park_2016}, the $z_i$ are the sufficient
statistics of a particular sample, whereas for gradient perturbation
\citep{Jalko_2016,Wang2015ICML}, the $z_i$ are the clipped per-sample gradient
contributions.  When a single party holds the entire data set, the sum
$z$ in Eq.~(\ref{eq:perturbed_sum}) can be computed easily, but the
case of distribted data makes things more difficult.

\section{Secure and private learning}
\label{sec:crypto}

Let us assume there are $N$ data holders (called clients in the
following), who each hold a single data sample. We 
would like to use the aggregate data for learning, but the clients do not want to
reveal their data as such to anybody else.
The main problem with the distributed setting is that if each client uses 
a trusted aggregator DP technique separately, the noise $\eta$ in Eq.~(\ref{eq:perturbed_sum})
is added by each client, increasing the total noise variance by
a factor of $N$ compared to the TA setting,
effectively reducing to naive input perturbation. To reduce the noise level without compromising
on privacy, the individual data samples need to be combined without
directly revealing them to anyone.

Our solution to this problem uses an SMC approach based on a
form of secret sharing: each client sends their term of the sum,
split to separate messages, to $M$ servers such that together the
messages sum up to the desired value, but individually they are just
random noise.  This can be implemented efficiently using a fixed-point
representation of real numbers which allows exact cancelling of the
noise in the addition.  Like any secret sharing approach, this algorithm
is secure as long as not all $M$ servers collude.  Cryptography is
only required to secure the communication between the client and the
server.  Since this does not need to be homomorphic as in many other
protocols, faster symmetric cryptography can be used for the bulk of
the data.  We call this the Distributed Compute Algorithm (DCA),
which we introduce next in detail.

\subsection{Distributed compute algorithm (DCA)}
\label{sec:DCA}

\begin{figure*}[tb]
  \centering
\subfigure[ DCA setting
\label{plot:DC_setting} ]{ \includegraphics[width=0.46\textwidth]{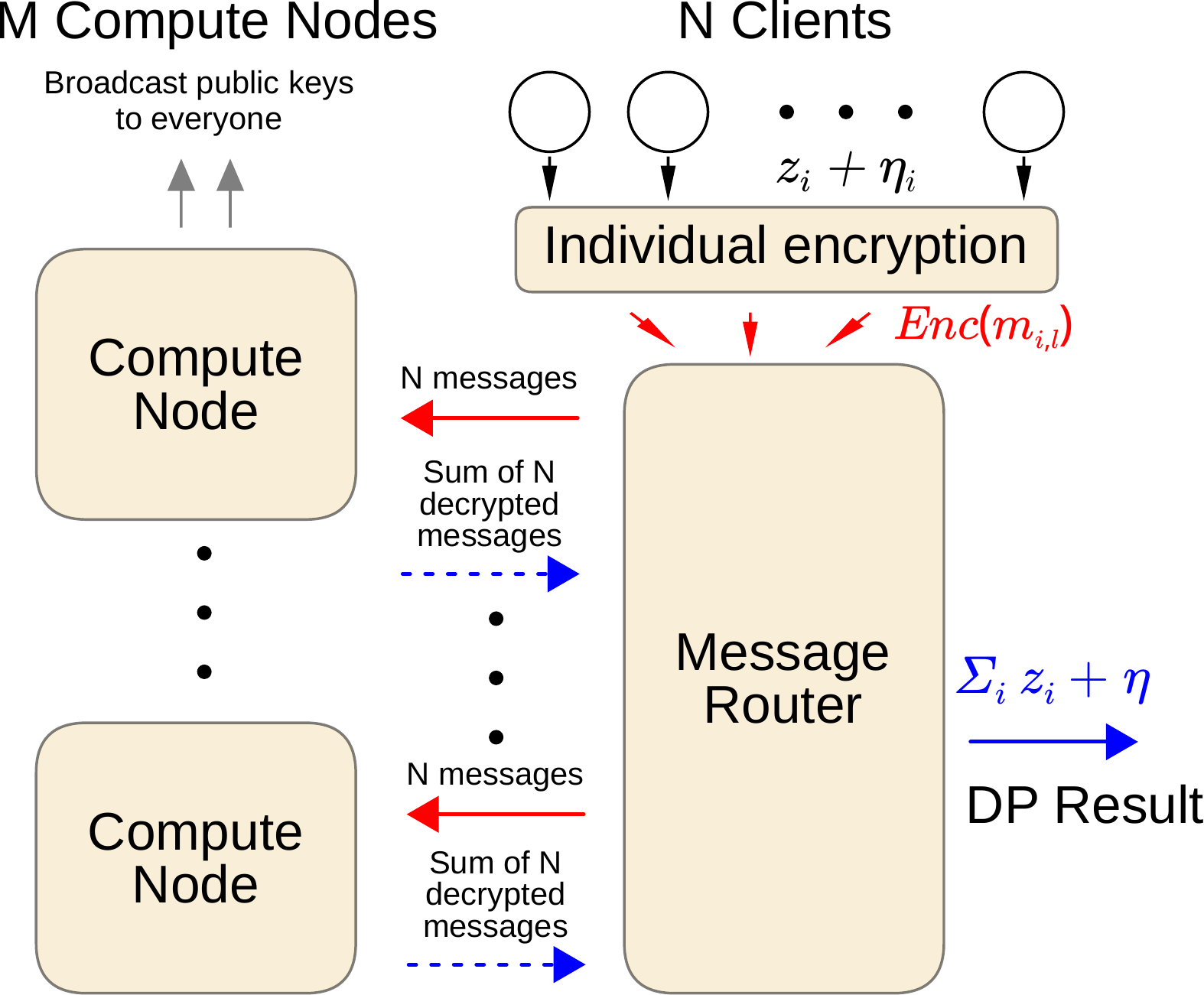} }\hfill
\subfigure[Extra scaling factor
\label{plot:scaling_factor} ]{ \includegraphics[width=0.46\textwidth,trim=11mm 4mm 16mm 9mm,clip]{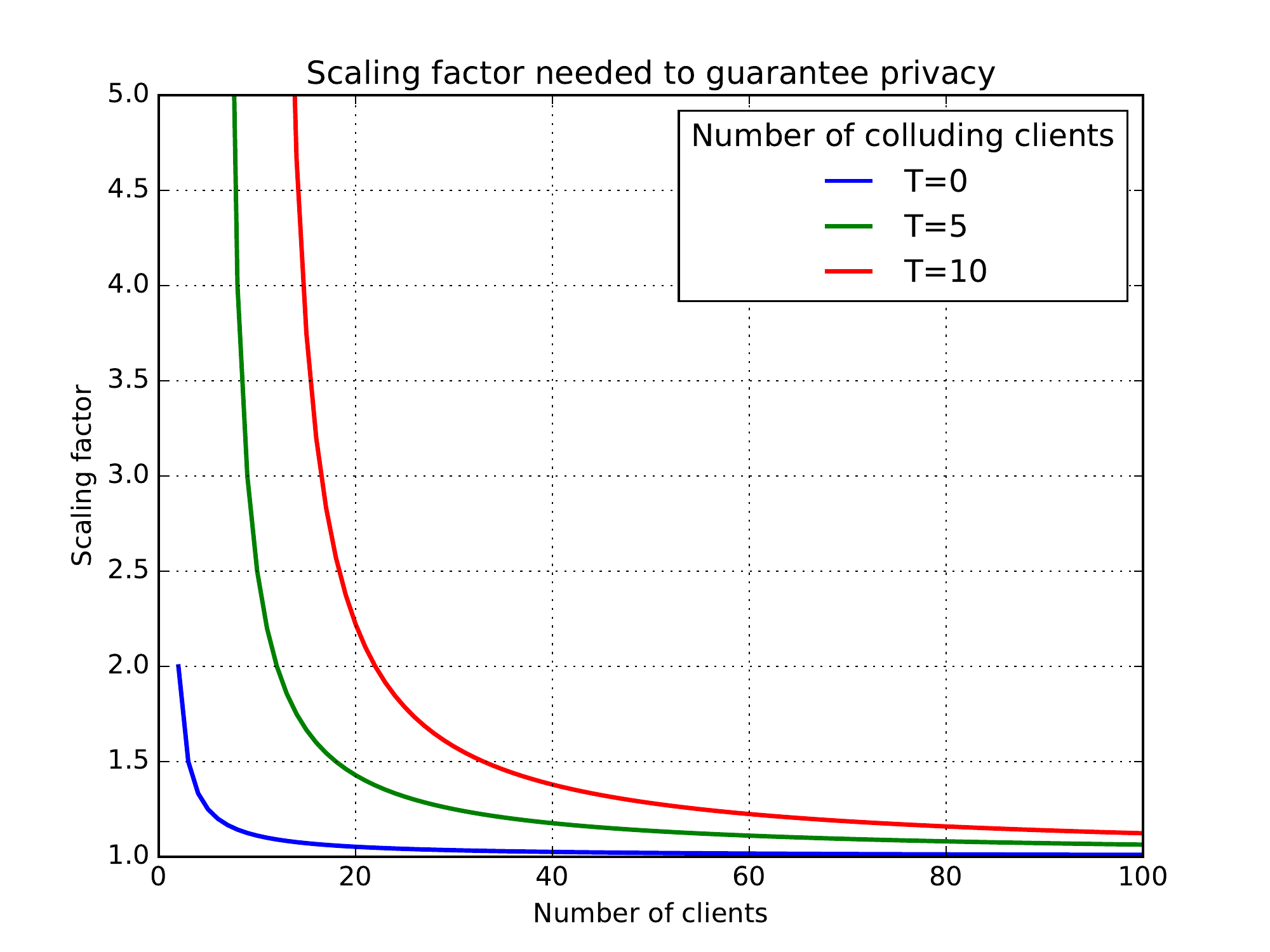} } \\
\caption{\ref{plot:DC_setting}: Schematic diagram of the Distributed Compute algorithm (DCA). Red refers to encrypted values, blue to unencrypted (but blinded or DP) values. \ref{plot:scaling_factor} Extra scaling factor needed for the noise in the distributed setting with $T$ colluding clients as compared to the trusted aggregator setting.}
  \label{fig:general_plots}
\end{figure*}

In order to add the correct amount of noise while avoiding revealing 
the unperturbed data to any single party, we combine an 
encryption scheme with the Gaussian mechanism for DP as illustrated in
Fig.~\ref{plot:DC_setting}. Each individual client adds a small amount of 
Gaussian noise to his data, resulting in the aggregated noise to be another 
Gaussian with large enough variance. The details of the noise scaling are presented 
in the Section \ref{sec:crypto_DP_noise}.

The scheme relies on several independent aggregators, called Compute nodes 
(Algorithm \ref{alg:DCA}). At a general level, the clients divide their data 
and some blinding noise into shares that are each sent to one Compute. After 
receiving shares from all clients, each Compute decrypts the values, sums them 
and broadcasts the results. The final results can be obtained by summing up 
the values from all Computes, which cancels the blinding noise.

\begin{algorithm}[tb]
   \caption{Distributed Compute Algorithm for distributed summation with independent Compute nodes}
   
   \label{alg:DCA}
   \begin{algorithmic}[1]
   \REQUIRE Distributed Gaussian mechanism noise variances $\sigma^2_j$,
   $j = 1, \dots, d$ (public); \\
   Number of parties $N$ (public); \\
   Number of Compute nodes $M$ (public) \\
   $d$-dimensional vectors $\vz_i$ held by clients $i \in \{1, \dots, N\}$
   \ENSURE Differentially private sum
   $\sum_{i=1}^N \left(\vz_i + \veta_i\right) $,
   where $\veta_i \sim \mathcal{N}(0, \mathrm{diag}(\sigma_{j}^2))$

   \STATE Each client $i$ simulates
   $\veta_i \sim \mathcal{N}(0, \mathrm{diag}(\sigma^2_j))$ and 
   $M-1$ vectors $\vr_{i,k}$ of uniformly random fixed-point data
   with $\vr_{i,M} = \sum_{l=1}^{M-1}$ to ensure that
   $\sum_{k=1}^M \vr_{i,k}=\mathbf{0}_d$ (a vector of zeros).

   \STATE Each client $i$ computes the messages
   $\vm_{i,1} = \vz_i + \veta_i + \vr_{i,1}$, $\vm_{i,k} = \vr_{i,k}, k=2,\dots M$,
   and sends them securely to the corresponding Compute $k$.
   
   \STATE After receiving messages from all of the clients, 
   Compute $k$ decrypts the values and broadcasts the noisy aggregate sums 
   $\vq_{k} = \sum_{i=1}^{N} m_{i,k}$. A final aggregator will
   then add these to obtain $\sum_{k=1}^M \vq_{k} = \sum_{i=1}^N (\vz_i + \veta_i)$.
\end{algorithmic}
\end{algorithm}

\subsubsection{Threat model}

We assume there are at most $T$ clients who may collude to 
break the privacy, either by revealing the noise they add to their data 
samples or by abstaining from adding the noise in the first place. The rest 
are honest-but-curious (HbC), i.e.,\ they will take a peek at other people's data 
if given the chance, but they will follow the protocol.

To break the privacy of individual clients, all Compute nodes need to collude. We 
therefore assume that at least one Compute node follows the protocol. We further 
assume that all parties have an interest in 
the results and hence will not attempt to pollute the results with invalid values.

\subsubsection{Privacy of the mechanism}
\label{sec:crypto_DP_noise}

In order to guarantee that the sum-query results returned by 
Algorithm \ref{alg:DCA} are DP, we need to show that the 
variance of the aggregated Gaussian noise is large enough.

\begin{theorem}[Distributed Gaussian mechanism]
\label{theorem_1}
If at most $T$ clients collude or drop out of the protocol, the
sum-query result returned by Algorithm \ref{alg:DCA} is
$(\epsilon,\delta)$-DP, when the variance of the added noise $\sigma_j^2$ fulfils
\begin{equation*}
\sigma_{j}^2 \geq \frac{1}{N-T-1} \sigma_{j,std}^2,
\end{equation*}
where $N$ is the number of clients and $\sigma_{j,std}^2$ is the
variance of the noise in the standard $(\epsilon,\delta)$-DP
Gaussian mechanism given in
Eq.~(\ref{eq:gauss_sigma}).
\end{theorem}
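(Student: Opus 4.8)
The plan is to reduce the distributed mechanism to the ordinary Gaussian mechanism of Eq.~(\ref{eq:gauss_sigma}) by showing that, from the viewpoint of any coalition of at most $T$ clients, the released quantity is the true sum perturbed by a single Gaussian whose per-coordinate variance is at least $(N-T-1)\sigma_j^2$. If that holds, the claim follows by matching this effective variance to the standard threshold $\sigma_{j,std}^2$.

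First I would fix the coalition's view. Since at least one Compute node is honest, the secret sharing guarantees that the blinding vectors $\vr_{i,k}$ (which sum to $\mathbf{0}_d$) hide every individual message, so the coalition learns nothing beyond the final aggregate $\sum_{i=1}^N(\vz_i+\veta_i)$ together with its own inputs and noise terms. Subtracting the coalition's known contributions, the coalition effectively observes $\sum_{i\in H}(\vz_i+\veta_i)$, where $H$ denotes the honest clients. Because the perturbation is additive and independent of the data, this is exactly a sum-query output perturbed by $\sum_{i\in H}\veta_i$, which in coordinate $j$ is distributed as $\mathcal{N}(0,\,|H|\,\sigma_j^2)$.

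Next I would bound $|H|$ in the worst case and isolate the noise that actually protects a single target record. Each of the (at most $T$) corrupted clients may either reveal its $\veta_i$ or omit it, so those terms carry no uncertainty for the coalition and must be discarded. To obtain a guarantee that does not rest on a client's own randomness, I would additionally exclude the target client's own term $\veta_{i^*}$ when arguing privacy for that client. This leaves at least $N-T-1$ mutually independent honest noise terms, whose sum is $\mathcal{N}(0,(N-T-1)\sigma_j^2)$ in coordinate $j$ and is independent of the target's value. The $\ell_2$-sensitivity of the sum query is the same worst-case quantity $\Delta_2(f)$ as in the trusted-aggregator setting, since the differing record can shift the sum by the same amount regardless of how the data are distributed.

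Finally I would invoke the Gaussian mechanism: setting the effective per-coordinate variance $(N-T-1)\sigma_j^2$ to meet the threshold $\sigma_{j,std}^2$ of Eq.~(\ref{eq:gauss_sigma}) gives $\sigma_j^2\ge \sigma_{j,std}^2/(N-T-1)$, which is the stated bound. The main obstacle is the third step. One must argue carefully that the coalition's view collapses to the aggregate, so that the information-theoretic secret sharing contributes no additional privacy loss on top of the Gaussian noise, and then pin down exactly which noise terms may be assumed unknown to the adversary. The $-1$ is the delicate point: privacy for a given client is underwritten only by the \emph{other} honest clients' noise, not by that client's own contribution, and getting this accounting right is precisely what fixes the constant in the bound.
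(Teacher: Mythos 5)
Your proposal is correct and follows essentially the same route as the paper's proof: both divide the Gaussian noise among the clients, discount the noise terms known to the at most $T$ colluders plus one further client (yielding $N-T-1$ independent honest contributions summing to $\mathcal{N}(0,(N-T-1)\sigma_j^2)$), and match this effective variance against $\sigma_{j,std}^2$ from Eq.~(\ref{eq:gauss_sigma}). Your treatment is in fact more careful than the paper's, which does not spell out the reduction of the coalition's view to the aggregate via the secret sharing, nor the exact accounting behind the $-1$.
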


\begin{proof}
See Supplement.
\end{proof}
In the case of all HbC clients, $T=0$.
The extra scaling factor increases the variance of the DP, 
but this factor quickly approaches $1$ as the number of clients increases, 
as can be seen from Figure~\ref{plot:scaling_factor}.

\subsubsection{Fault tolerance}
\label{sec:less_assumptions}

The Compute nodes need to know which clients' contributions they can safely 
aggregate. This feature is simple to implement e.g. with pairwise-communications 
between all Compute nodes. In order to avoid 
having to start from scratch due to insufficient noise for DP,  the same strategy 
used to protect against colluding clients can be utilized:
when $T>0$, at most $T$ clients in total can 
drop or collude and the scheme will still remain private.

\subsubsection{Computational scalability}
\label{sec:comp-scal}

Most of the operations needed in Algorithm \ref{alg:DCA} are extremely
fast: encryption and decryption can use fast symmetric algorithms such
as AES (using slower public key cryptography just for the key of the
symmetric system) and the rest is just integer additions for the fixed
point arithmetic.  The likely first bottlenecks in the implementation
would be caused by synchronisation when gathering the messages as well
as the generation of cryptographically secure random vectors
$\vr_{i,k}$.

\subsection{Differentially private Bayesian learning on distributed data}
\label{sec:distributed_learning}

In order to perform DP Bayesian learning securely in the distributed
setting, we use DCA (Algorithm \ref{alg:DCA})
to compute the required data summaries that correspond to 
Eq.~(\ref{eq:perturbed_sum}).
In this Section we consider how to combine this scheme with concrete 
DP learning methods introduced for the trusted aggregator setting, so as to 
provide a wide range of possibilities 
for performing DP Bayesian learning securely with distributed data.

The aggregation algorithm is most straightforward to apply to the SSP
method \cite{Foulds2016,Honkela_2016} for
exact and approximate posterior inference on exponential 
family models. \cite{Foulds2016} and \cite{Honkela_2016} use Laplacian noise
to guarantee $\epsilon$-DP, which is a 
stricter form of privacy than the $(\epsilon, \delta)$-DP used in 
DCA \cite{dwork_roth_2014}. We consider here 
only $(\epsilon, \delta)$-DP version of the methods, and discuss the 
possible Laplace noise mechanism further in Section \ref{sec:discussion}.
The model training in this case is done in a single iteration, so a
single application of Algorithm \ref{alg:DCA} is enough for learning. 
We consider a more detailed example in Section \ref{sec:test}.

We can apply DCA also to DP variational
inference \cite{Jalko_2016,Park_2016}.  These methods rely on possibly
clipped gradients or expected sufficient statistics calculated from
the data.  Typically,
each training iteration would use only a mini-batch instead of the full data. 
To use variational inference in the distributed setting, an arbitrary 
party keeps track of the current (public) model parameters and the 
privacy budget, and asks for updates from the clients.

At each iteration, the model trainer selects a random mini-batch 
of fixed public size from the available clients and sends them the 
current model parameters. The selected clients then 
calculate the clipped gradients or expected
sufficient statistics using their data, add
noise to the values scaled reflecting the batch size, and pass them 
on using DCA. The model trainer 
receives the decrypted DP sums from the output and updates the 
model parameters.

\subsubsection{Distributed Bayesian Linear Regression with Data Projection}
\label{sec:test}

As an empirical example, we consider Bayesian linear regression (BLR) with
data projection in the distributed setting.
The standard BLR model depends on the data only through sufficient statistics
and the approach discussed in 
Section \ref{sec:distributed_learning} can be used in a straightforward 
manner to fit the model by running a single round of 
DCA.

The more efficient BLR with projection (Algorithm
\ref{alg:lin_regression}) \cite{Honkela_2016} reduces 
the data range, and hence sensitivity, by non-linearly projecting
all data points inside stricter bounds, which 
translates into less added noise.
We can select the bounds to optimize bias vs. DP noise variance. 
In the distributed setting, we need to
run an additional round of DCA and
use some privacy budget to estimate data standard deviations (stds).
However, as shown 
by the test results (Figure \ref{fig:UCI_data}), this can still
achieve significantly better utility with a given privacy level.

\begin{algorithm}[tb]
   \caption{Distributed linear regression with projection}
   \label{alg:lin_regression}
\begin{algorithmic}[1]
   \REQUIRE Number of clients $N$ (public), \\
   data and target values $(x_{ij}, y_i),j=1,\dots,d$ held by clients $i \in \{1,\dots, N \}$, \\
   assumed data and target bounds $(-c_j,c_j), j=1,\dots,d+1$ (public), \\
   privacy budget $(\epsilon, \delta)$ (public), \\
   \ENSURE DP BLR model sufficient statistics of projected data
   $\sum_{i=1}^N \hat{\vx}_i \hat{\vx}_i^T + \veta^{(1)}$, $\sum_{i=1}^N \hat{\vx}_i^T \hat{y}_i +\veta^{(2)}$, calculated using projection to estimated optimal bounds
   \STATE Each client projects his data to the assumed bounds $(-c_j,c_j) \ \forall j$.
   \STATE Calculate marginal std estimates $\sigma^{(1)},\dots,\sigma^{(d+1)}$ 
   by running Algorithm \ref{alg:DCA} using the assumed bounds for sensitivity and a chosen share of the privacy budget.
   \STATE Estimate optimal projection thresholds $p_j,j=1,\dots,d+1$ as fractions of std on auxiliary data. Each client then projects his data to the estimated optimal bounds $(-p_j \sigma^{(j)},p_j \sigma^{(j)}), j=1,\dots,d+1$.
   \STATE Aggregate the unique terms in the DP sufficient statistics by 
     running Algorithm \ref{alg:DCA} using the estimated optimal bounds 
     for sensitivity and the remaining privacy budget, 
   and combine the DP result vectors into the symmetric $d\times d$ matrix and $d$-dimensional vector of DP sufficient statistics.
\end{algorithmic}
\end{algorithm}

The assumed bounds in Step 1 of Algorithm \ref{alg:lin_regression} would 
typically be available from general knowledge of the data. 
The projection in Step 1 ensures
the privacy of the scheme even if the bounds are invalid for some samples.
We determine the optimal projection thresholds $p_j$ in Step 3 using
the same general approach as \cite{Honkela_2016}: we create an auxiliary data set 
of equal size as the original with data generated as
\begin{align}
x_i & \sim N(0,I_d) \\
\beta & \sim N(0,\lambda_0I) \\
y_i|x_i & \sim N(x_i^T \beta, \lambda).
\end{align}
We then perform grid search on the auxiliary data with varying thresholds to 
find the optimal prediction performance. The source code for our 
implementation is available through
GitHub\footnote{Upcoming} and a more detailed 
description can be found in the Supplement.

\section{Experimental Setup}
\label{sec:setup}

We demonstrate the secure DP Bayesian learning scheme in practice by 
testing the performance of the BLR with data projection, the implementation 
of which was discussed in Section \ref{sec:test}, along with the DCA 
(Algorithm \ref{alg:DCA}) in the all HbC clients distributed 
setting ($T=0$).

With the DCA our primary interest is scalability. In the case of BLR 
implementation, we are mostly interested in comparing the
distributed algorithm to the trusted aggregator version as well as
comparing the
performance of the straightforward BLR to the variant using data projection, 
since it is not clear a priori if the extra cost in privacy necessitated by the 
projection in the distributed setting is offset by the reduced noise level.

We use simulated data for the DCA scalability testing, and real
data for the BLR tests. As real data, we use the Wine Quality
\citep{Cortez_2009} (split into white and red wines) and Abalone data sets from the UCI
repository\cite{Lichman:2013}, 
as well as 
the Genomics of Drug Sensitivity in Cancer (GDSC) project data
\footnote{http://www.cancerrxgene.org/, release 6.1, March 2017}. 
The measured task in the GDSC data is to predict drug sensitivity of 
cancer cell lines from gene expression data 
(see Supplement for a more detailed description). 
The datasets are assumed to be zero-centred. This 
assumption is not crucial but is done here for simplicity; 
non-zero data means can be estimated like the marginal stds 
at the cost of some added noise (see Section \ref{sec:test}).

For estimating the marginal std, we also need to assume bounds 
for the data. For unbounded data, we can enforce 
arbitrary bounds simply by projecting all data inside the chosen 
bounds, although very poor choice of bounds will lead to poor 
performance. With real distributed data, the assumed bounds could 
differ from the actual data range. In the UCI tests we simulate this effect by 
scaling each data dimension to have a range of length $10$, and then assuming bounds 
of $[-7.5,7.5]$, i.e., the assumed bounds clearly overestimate the length of 
the true range, thus adding more noise to the results. 
The actual scaling chosen here is arbitrary. With the GDSC data, 
the true range is known due to the nature 
of the data (see Supplement).

The optimal projection thresholds are searched for 
using 10 repeats on a 
grid with $20$ points between $0.1$ and
$2.1$ times the std of
the auxiliary data set. 
In the search we use one common threshold for all data dimensions and a 
separate one for the target.

For accuracy measure, we use prediction accuracy on a separate 
test data set. The size of the test set for UCI in Figure \ref{fig:UCI_data} is $500$ 
for red wine, 1000 for white wine, and 1000 for abalone data. The test set size 
for GDSC in Figure \ref{fig:drugsens_plots} is 100.
For UCI, we compare the median performance measured on 
mean absolute error over 25 cross-validation (CV) runs, while for GDSC 
we measure mean prediction accuracy to 
sensitive vs insensitive with Spearman's rank correlation on 
30 CV runs. In both cases, we use input perturbation \cite{dwork_et_al_2006} 
and the trusted aggregator setting as baselines.

\section{Results}
\label{sec:results}

\begin{table}
	\centering
	\begin{tabular}[]{l l l l l}
		& \textbf{N=$10^2$} & \textbf{N=$10^3$} & \textbf{N=$10^4$} & \textbf{N=$10^5$} \\
		\textbf{d=10}     & 1.72 & 1.89 & 2.99 & 8.58 \\
		\textbf{d=$10^2$} & 2.03 & 2.86 & 12.36 & 65.64 \\
		\textbf{d=$10^3$} & 3.43 & 10.56 & 101.2 & 610.55 \\
		\textbf{d=$10^4$} & 15.30 & 84.95 & 994.96 & 1592.29 \\
	\end{tabular}
\caption{DCA experiment average runtimes in seconds with 5 repeats, using M=10 Compute nodes, N clients and vector length d.}
\label{tab:dca10}
\end{table}

Table~\ref{tab:dca10} shows runtimes of a distributed Spark
implementation of the DCA algorithm.
The timing excludes encryption, but running AES for the data of
the largest example would take less than 20 s on a single thread on a
modern CPU.
The runtime modestly increases as $N$ or $d$ is increased.
This suggests that the prototype is reasonably scalable.
Spark overhead sets a lower bound runtime of approximately 1 s for
small problems.  For large $N$ and $d$, sequential communication at
the 10 Compute threads is the main bottleneck.  Larger $N$ could be
handled by introducing more Compute nodes and clients only
communicating with a subset of them.

Comparing the results on predictive error with and without projection
(Fig.~\ref{fig:UCI_data} and Fig.~\ref{fig:drugsens_plots}), 
it is clear that despite incurring extra privacy cost for having to estimate 
the marginal standard deviations, using the 
projection can improve the results markedly with a given privacy budget. 

The results also demonstrate that compared to the trusted
aggregator setting, the extra noise added due to 
the distributed setting with HbC clients is
insignificant in practice as the results of the distributed and
trusted aggregator algorithms are effectively indistinguishable.

\begin{figure*}[tb]
  \centering
\subfigure[ Red wine data set
\label{plot:red_wine_no_proj} ]
{ \includegraphics[width=0.31\textwidth,trim=24mm 13mm 30mm 6mm,clip]{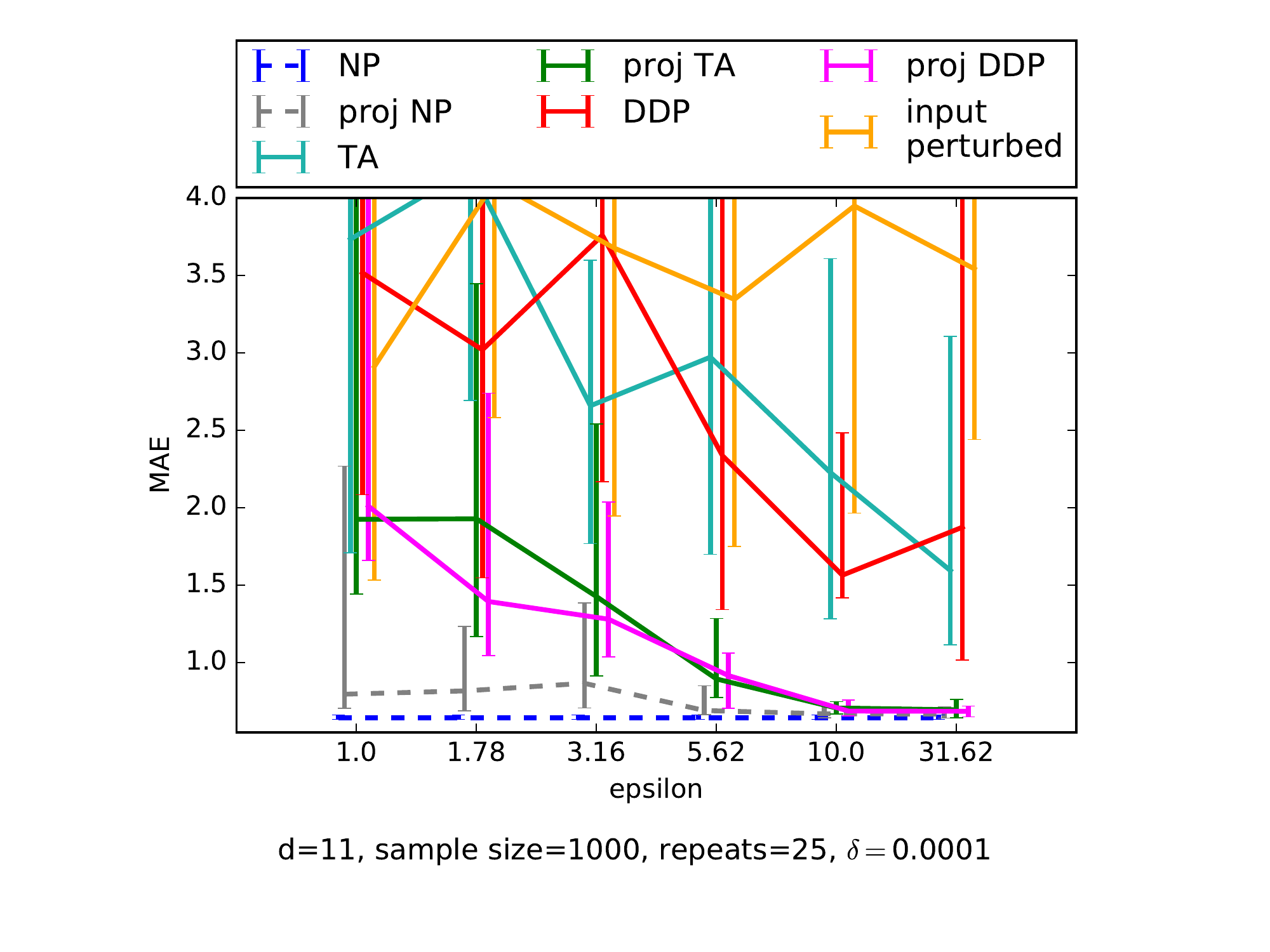} }\hfill
\subfigure[Abalone data set
\label{plot:abalone_no_proj} ]
{ \includegraphics[width=0.31\textwidth,trim=24mm 13mm 30mm 6mm,clip]{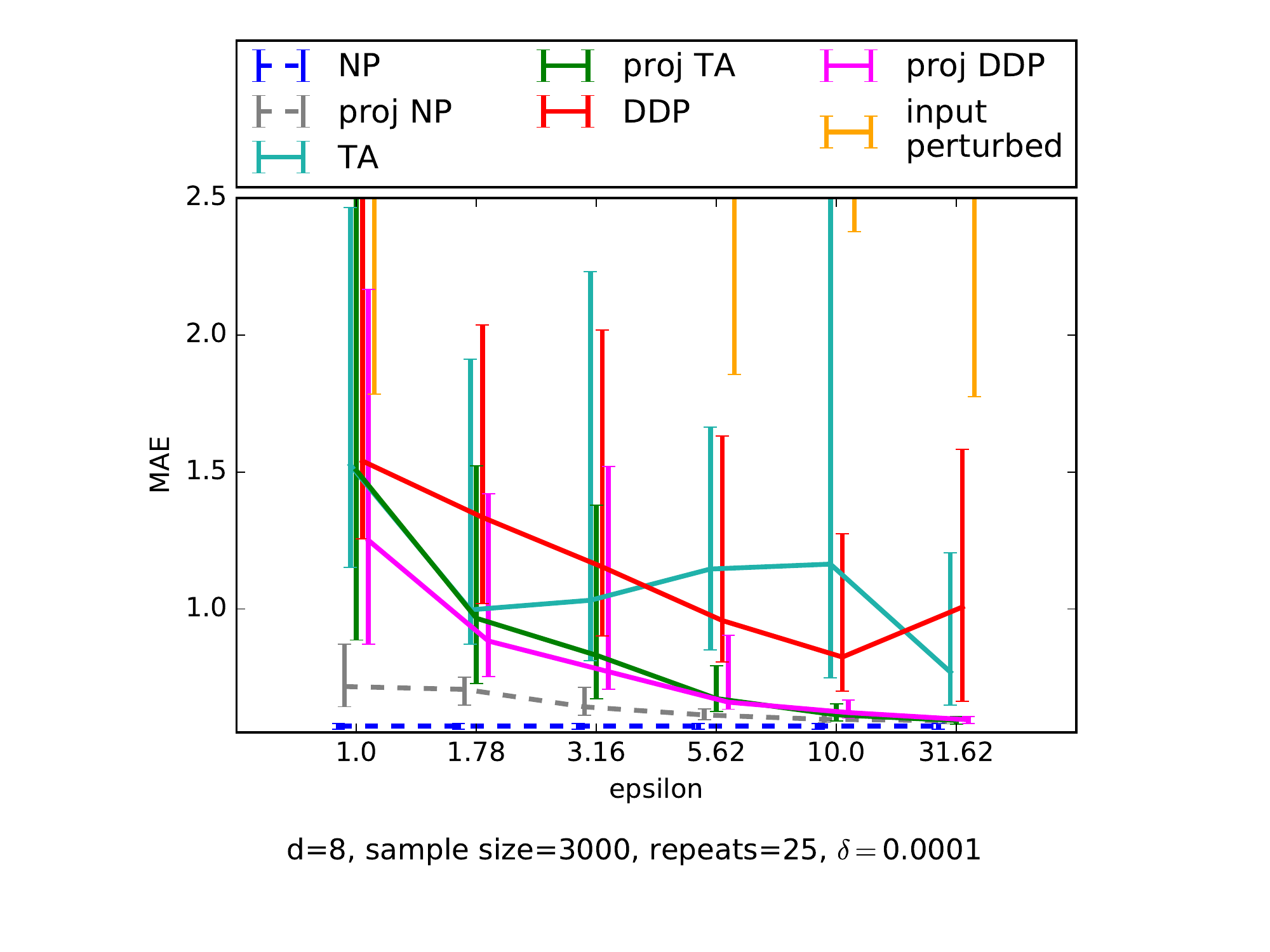} }\hfill
\subfigure[White wine data set
\label{plot:white_wine_no_proj} ]
{ \includegraphics[width=0.31\textwidth,trim=24mm 13mm 30mm 6mm,clip]{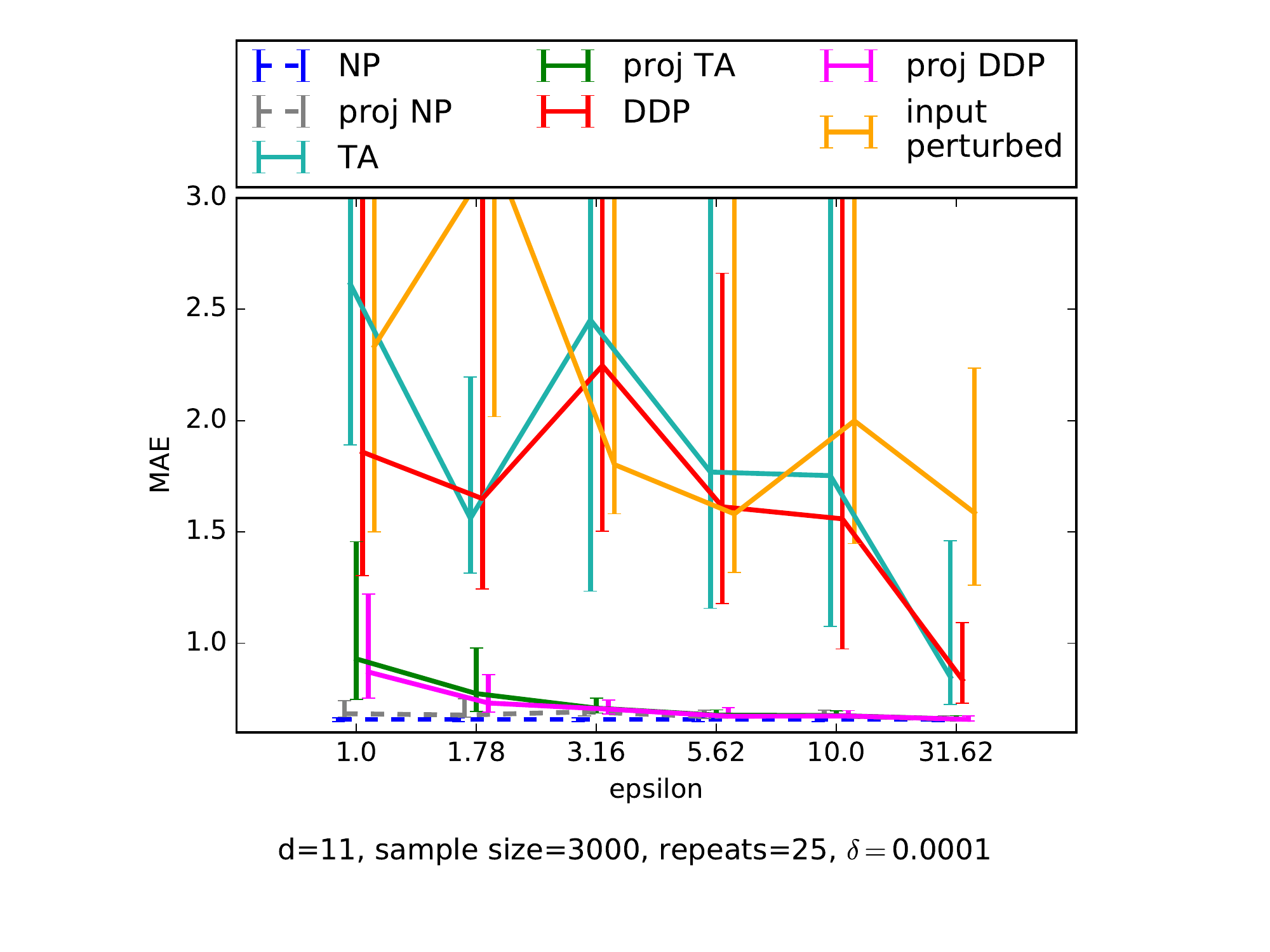} } \\
  \caption{Median of the predictive accuracy measured on mean absolute error (MAE) on several UCI data sets with error bars denoting the interquartile range (lower is better). The performance of the distributed methods (DDP, DDP proj) is indistinguishable from the corresponding undistributed  algorithms (TA, TA proj) and the projection (proj TA, proj DDP) can clearly be beneficial for prediction performance. NP refers to non-private version, TA to the trusted aggregator setting, DDP to the distributed scheme.}
  \label{fig:UCI_data}
\end{figure*}

\begin{figure*}[tb]
  \centering
\subfigure[ Drug sensitivity prediction
\label{plot:drugsens1} ]{ \includegraphics[width=0.46\textwidth,trim=21mm 14mm 31mm 6mm,clip]{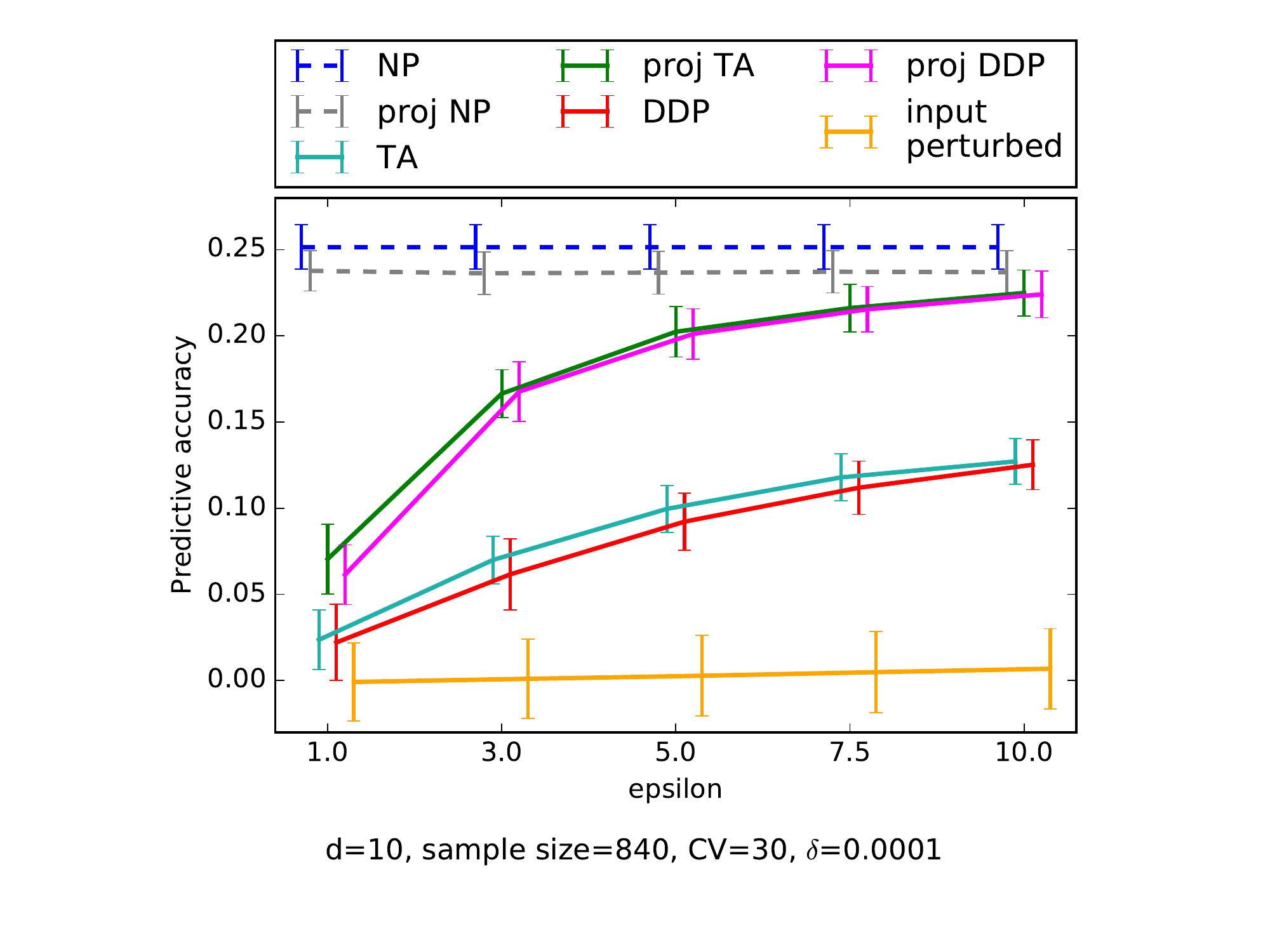} }\hfill
\subfigure[Drug sensitivity prediction, selected methods
\label{plot:drugsens1} ]{ \includegraphics[width=0.46\textwidth,trim=21mm 14mm 31mm 10mm,clip]{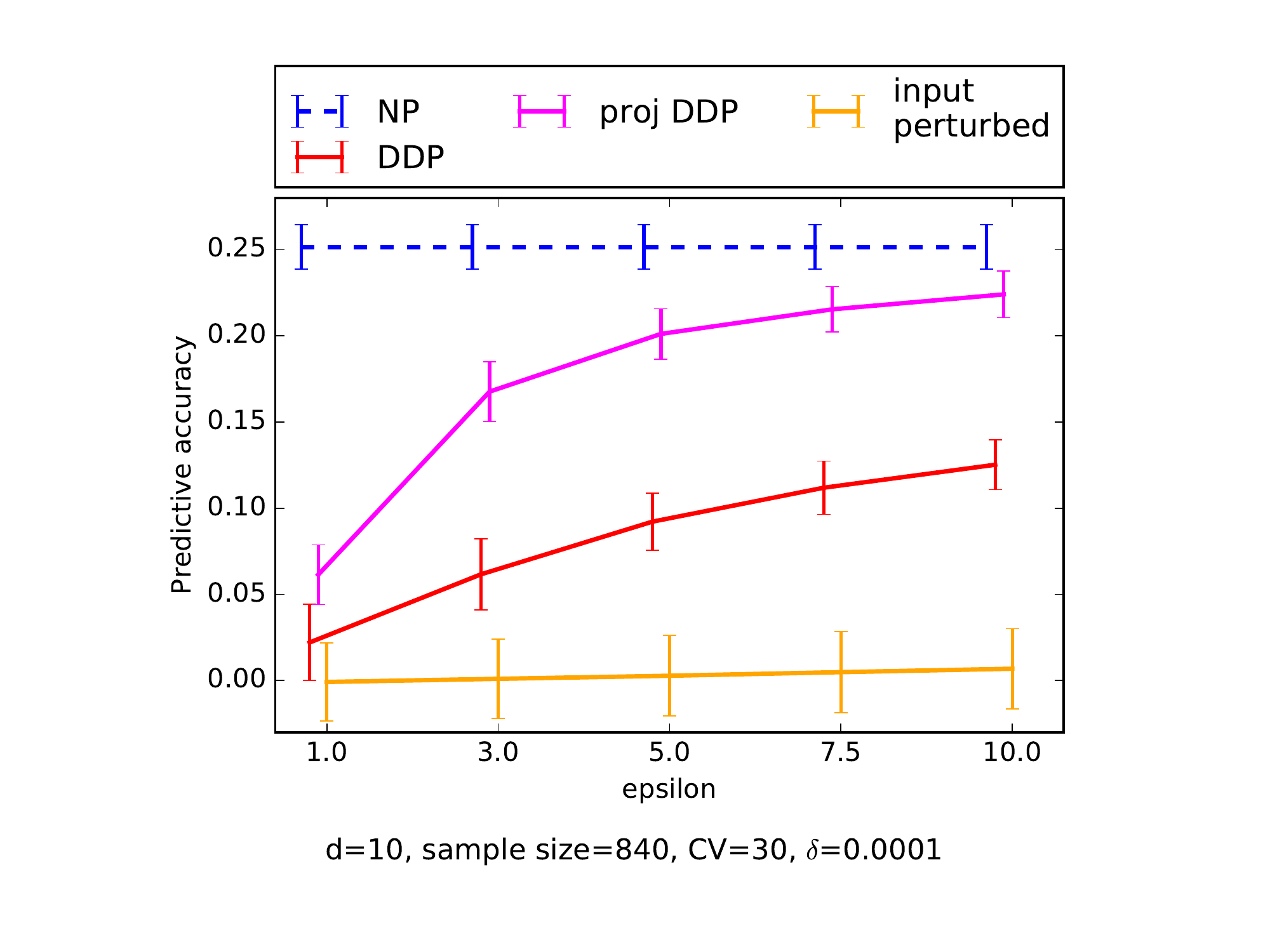} } \\
\caption{Mean drug sensitivity prediction accuracy on GDSC dataset with error bars denoting standard deviation over CV runs (higher is better). Distributed results (DDP, proj DDP) do not differ markedly from the corresponding trusted aggregator (TA, proj TA) results. The projection (proj TA, proj DDP) is clearly beneficial for performance. The actual sample size varies between drugs. NP refers to non-private version, TA to the trusted aggregator setting, DDP to the distributed scheme.}
  \label{fig:drugsens_plots}
\end{figure*}

\section{Related Work}
\label{sec:related_work}

The idea of distributed private computation through addition of noise
generated in a distributed manner was first proposed by
\citet{dwork_distributed_2006}.
However, to the best of our knowledge, there is no prior work on secure DP
Bayesian statistical inference in the distributed setting.

In machine learning, \cite{Pathak_2010} presented the first method
for aggregating classifiers in a
DP manner, but their approach is sensitive to the number of parties
and sizes of the data sets held by each party and cannot be applied in
a completely distributed setting.  \cite{Rajkumar2012} improved upon
this by an algorithm for distributed DP stochastic gradient descent
that works for any number of parties.  The privacy of the algorithm
is based on perturbation of gradients which cannot be directly applied
to the efficient SSP mechanism.  The idea of aggregating classifiers was
further refined in \cite{Hamm2016} through a
method that uses an auxiliary public data set
to improve the performance.

The first practical method for implementing DP queries in a
distributed manner was the distributed Laplace mechanism presented in
\cite{Rastogi_2010}.
The distributed Laplace mechanism could be used instead of the
Gaussian mechanism if pure $\epsilon$-DP is required, but the method,
like those in \cite{Pathak_2010,Rajkumar2012}, needs homomorphic
encryption which can be computationally more demanding for
high-dimensional data.

There is a wealth of literature on secure distributed computation of
DP sum queries as reviewed in \cite{dp_smc_comparison_2015}.
The methods of \cite{Shi_2011,Acs_2011,Chan_2012,dp_smc_comparison_2015}
also include different forms of noise scaling to provide collusion
resistance and/or fault tolerance, where the latter requires
a separate recovery round after data holder failures which is
not needed by DCA.
\cite{eigner2014differentially} discusses low level details of
an efficient implementation of the distributed Laplace mechanism.

Finally, \cite{Wu_2016} presents several proofs related to the SMC setting 
and introduce a protocol for generating approximately 
Gaussian noise in a distributed manner. Compared to their protocol, our method 
of noise addition is considerably simpler and faster, and produces exactly instead 
of approximately Gaussian noise with negligible increase in noise level.

\section{Discussion}
\label{sec:discussion}

We have presented a general framework for performing DP Bayesian learning
securely in a distributed setting.  Our method combines a practical
SMC method for
calculating secure sum queries with efficient Bayesian DP learning techniques 
adapted to the distributed setting.

DP methods are based on adding sufficient noise to effectively mask
the contribution of any single sample.  The extra loss in accuracy due
to DP tends to diminish as the number of samples increases and
efficient DP estimation methods converge to their non-private
counterparts as the number of samples increases
\citep{Foulds2016,Honkela_2016}.  A distributed DP learning method can
significantly help in increasing the number of samples because data
held by several parties can be combined thus helping make DP learning
significantly more effective.

Considering the DP and the SMC components separately, although both are necessary 
for efficient learning, it is clear that the 
choice of method to use for each sub-problem can be made largely independently. 
Assessing these separately, we can therefore easily change the privacy 
mechanism from the Gaussian used in Algorithm \ref{alg:DCA} to the 
Laplace mechanism, e.g.\ by utilising one of the distributed Laplace noise
addition methods presented in \cite{dp_smc_comparison_2015} to
obtain a pure $\epsilon$-DP method.  If need be, the secure sum algorithm in our
method can also be easily replaced with one that better suits the security 
requirements at hand.

While the noise introduced for DP will not improve the performance of
an otherwise good learning algorithm, a DP solution to a learning
problem can yield better results if the DP guarantees allow access to
more data than is available without privacy.  Our distributed method can further
help make this more efficient by securely and privately combining data
from multiple parties.

\subsubsection*{Acknowledgements}

This work was funded by the Academy of Finland [303815 to S.T., 303816 to S.K. , Centre of Excellence COIN, 283193 to S.K., 294238 to S.K., 292334 to S.K., 278300 to A.H., 259440 to A.H. and 283107 to A.H.] and the Japan Agency for Medical Research and Development (AMED).

\bibliographystyle{abbrvnat}
\bibliography{DDP_Bayesian_learning}

\newpage

\section*{Supplement}

This supplement contains proofs and extra discussion omitted from 
the main text.

\section{Privacy and fault tolerance}

\begin{theorem}[Distributed Gaussian mechanism]
\label{theorem_1}
If at most $T$ clients collude or drop out of the protocol, the
sum-query result returned by Algorithm \ref{alg:DCA} is
differentially private, when the variance of the added noise $\sigma_j^2$ fulfils
\begin{equation*}
\sigma_{j}^2 \geq \frac{1}{N-T-1} \sigma_{j,std}^2,
\end{equation*}
where $N$ is the number of clients and $\sigma_{j,std}^2$ is the
variance of the noise in the standard Gaussian mechanism given in
Eq.~(\ref{eq:gauss_sigma}).
\end{theorem}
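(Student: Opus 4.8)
The plan is to reduce the distributed mechanism to the ordinary Gaussian mechanism of Eq.~\eqref{eq:gauss_sigma} by identifying the noise that is \emph{guaranteed} to mask a target client's contribution from a worst-case adversary, and then forcing that effective noise variance to meet the standard requirement $\sigma_{j,std}^2$. First I would dispose of the cryptographic layer: since the shares $\vr_{i,k}$ sum to zero and at least one Compute node is honest, each Compute's partial sum $\vq_k$ reveals nothing about individual $\vz_i$, and the only quantity any adversary can reconstruct is the final aggregate $z = \sum_{i=1}^N (\vz_i + \veta_i)$ together with whatever it contributed itself. Thus the privacy question collapses to the familiar one: is $z$, viewed as a noisy answer to the sum query $f(D) = \sum_i \vz_i$, differentially private with respect to changing one honest client's sample?

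Next I would do the worst-case bookkeeping that produces the factor $N-T-1$. Fix an honest target client $i_0$ and adjacent data sets differing only in $\vz_{i_0}$. The $T$ colluding or dropping clients contribute no usable noise (they either omit $\veta_i$ or reveal it), so they leave the effective noise budget. In addition, the adversary observing $z$ may itself be a curious honest client $i_1 \neq i_0$, which knows its own $\veta_{i_1}$ and can subtract it; this strips one further term. Crucially, the target's own noise $\veta_{i_0}$ is independent of $\vz_{i_0}$ and secret from the adversary, so it still masks the change and must be retained. After conditioning on every noise term the adversary knows, the contribution that actually hides $\vz_{i_0}$ is $\sum_{i \in H}\veta_i$, where $H$ is the set of remaining honest clients containing $i_0$, with $|H| = N - T - 1$.

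The final step is routine. Because the $\veta_i$ are independent, conditioning on the known terms leaves $\sum_{i \in H}\veta_i \sim \mathcal{N}(0,(N-T-1)\sigma_j^2)$ in coordinate $j$, and once the adversary subtracts all quantities it knows, the conditional output equals $\vz_{i_0}$ plus a fixed offset plus exactly this Gaussian, so the likelihood ratio between $D$ and $D'$ is governed purely by it. Since the sum query's $l_2$-sensitivity under single-sample adjacency equals that of one client's datum, Eq.~\eqref{eq:gauss_sigma} certifies $(\epsilon,\delta)$-DP precisely when the effective variance satisfies $(N-T-1)\sigma_j^2 \geq \sigma_{j,std}^2$, that is, $\sigma_j^2 \geq \frac{1}{N-T-1}\sigma_{j,std}^2$.

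The main obstacle is exactly the worst-case accounting that yields the $-1$: one must argue that a single curious observer can always strip one noise term beyond the $T$ colluders, while at the same time justifying that the target's own noise may still be counted because it is data-independent and unknown to the adversary. Getting this split right---observer removes its own, target keeps its own---is what distinguishes the correct bound $N-T-1$ from the naive $N-T$ or an over-conservative $N-T-2$. The only other point needing care is verifying that the honest Compute together with the zero-sum blinding genuinely restricts the adversary's view to the single aggregate $z$, so that no per-client information leaks outside this noise analysis.
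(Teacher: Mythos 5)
Your proposal is correct and follows essentially the same route as the paper's proof: the aggregate noise is a sum of independent Gaussians, the $T$ colluding/dropping clients plus one curious observer can each strip their own noise term, and the remaining $N-T-1$ terms must together meet the standard Gaussian-mechanism variance of Eq.~\eqref{eq:gauss_sigma}. The paper's own argument is just a terser version of this same worst-case accounting, so your additional care about the cryptographic layer and the conditioning step only makes the same proof more explicit.
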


\begin{proof}

Using the property that a sum of independent Gaussian variables is
another Gaussian with variance equal to the sum of the component
variances, we can divide the total noise equally among the $N$
clients.

However, in the distributed setting even with all honest-but-curious clients,
there is an extra scaling factor needed compared to the standard DP.
Since each client knows the noise values she
adds to the data, she can also remove them from the aggregate
values. In other words, privacy then has to be guaranteed by the noise the
remaining $N-1$ clients add to the data.  If we further
assume the possibility of $T$ colluding clients, then the noise from $N-T-1$
clients must be sufficient to guarantee the privacy.

The added noise can therefore be calculated from 
the inequality
\begin{align}
\sum_{i=1}^{N-T-1} \sigma_{j}^2 \geq & \sigma_{j,std}^2 \\
\Leftrightarrow \ \sigma_{j}^2 \geq & \frac{1}{N-T-1} \sigma_{j,std}^2.
\end{align}
\end{proof}

\section{Bayesian linear regression}
\label{sec:BLR}

In the following, we denote the $i$th observation in $d$-dimensional data 
by $\bold x_i$, the scalar target values by $y_i$, and the whole $d+1-$dimensional 
dataset by $D_i = (\bold x_i, y_i)$. We assume all column-wise expectations to 
be zeroes for simplicity. For $n$ observations, we denote the sufficient 
statistics by $n\bar{xx}=\sum_{i=1}^n \bold x_i \bold x_i^T$ and 
$n \bar{xy} = \sum_{i=1}^n \bold x_i y_i$.

For the regression, we assume that
\begin{align}
y_i | \bold x_i \sim & N(\bold x_i^T \bold{\beta}, \lambda I), i=1,\dots, n \\
\bold{\beta} \sim & N(0, \lambda_0I),
\end{align}
where we want to learn the posterior over $\beta$, and $\lambda, \lambda_0$ are 
hyperparameters (set to $1$ in the tests). The posterior can be solved analytically to give
\begin{align}
\label{regr_posterior}
\beta | \bold y,\bold x & \sim N(\hat \mu, \hat \Lambda), \\
\hat \Lambda & = \lambda_0 I + \lambda n \bar{ x x}, \\
\hat \mu & = \hat \Lambda ^{-1} (\lambda n \bar{xy}).
\end{align}
The predicted values from the model are $\hat y = \bold x^T \hat \mu$.

The DP sufficient statistics are given by $n \hat{xx} = n \bar{xx} + \eta_{xx}, 
n \hat{xy} = n \bar{xy} + \eta_{xy}$, where $\eta_{xx}, \eta_{xy}$ consist of 
suitably scaled Gaussian noise added independently to each dimension. In total, 
there are $d(d+1)/2 + d$ parameters in the combined sufficient statistic, 
since $n\bar{xx}$ is a symmetric matrix.

The main idea in the data projection is simply to project the data into 
some reduced range. Since the noise level is determined by the sensitivity 
of the data, reducing the sensitivity by limiting the data range translates 
into less added noise.

With projection threshold $c$, the projection of data $x_i$ is given by
\begin{equation}
\breve{x_i} = \max(-c, \min(  x_i, c ) ).
\end{equation}

This data projection obviously discards information, but in various problems 
it can be beneficial to disregard some information in the data in order to 
achieve less noisy estimates of the model parameters. From the bias-variance 
trade-off point of view, this can be seen as increasing the bias while reducing 
the variance. The optimal trade-off then depends on the actual problem.

To run Algorithm~\ref{alg:lin_regression} (in the main text), 
we need to assume projection bounds 
$(c_j,d_j)$ for each dimension $j \in \{1,\dots,d+1\}$ for the data $(\bold x_i, y_i)_{i=1}^n$. In 
the paper we assume bounds of the form $(-c_j,c_j)$. To find 
good projection bounds, we first find an optimal projection threshold by a grid 
search on an auxiliary dataset, that is generated from a BLR model similar 
to the regression model defined above.

This gives us the projection thresholds in terms of std for each dimension. 
We then estimate the marginal std for each dimension by using Algorithm~\ref{alg:DCA} 
(in the main text), to fix the actual projection thresholds. For this the data is 
assumed to lie on some known bounded interval. In practice, the assumed 
bounds need to be based on prior information. In case the estimates 
are negative due to noise, they are set to small positive constants 
($0.5$ in all the tests).

The amount of noise each client needs to add to the output 
depends partly on the sensitivity of the function in question. 
The query function we are interested in returns a vector of 
length $d(d+1)/2+d$ that contains all the unique terms in the 
sufficient statistics needed for linear regression.

Let $\bold x, \bold y$ be the mismatching, maximally different elements 
over adjacent datasets s.t. dimensions $1,\dots,d$ are the 
independent variables, and $d+1$ is the target. Assume further 
that each dimension $j=1,\dots,d+1$ is bounded by $(-c_j,c_j)$.
The squared sensitivity 
of the query $f$ is then 
\begin{align}
\Delta_2 (f)^2 &= ||f(\bold x) - f(\bold y) ||_2^2 \\
&= || (x_j x_k - y_j y_k, 
x_j x_{d+1} - y_j y_{d+1} )_{j=1,k=j }^d ||_2^2 \\
&=  \sum_{j=1}^d \sum_{k=j}^d (x_j x_k - y_j y_k )^2 + 
\sum_{j=1}^d ( x_j x_{d+1} - y_j y_{d+1} )^2  \\
\label{testi}
& \leq \sum_{j=1}^d ( c_j^2 )^2 + \sum_{j=1}^d \sum_{k>j}^d (2 c_j  c_k  )^2 + 
\sum_{j=1}^d (2 c_j c_{d+1} )^2.
\end{align}

We assume $c_j = c_{x} \forall j=1,\dots,d$, so \eqref{testi} can be 
further simplified to $d(2d-1) c_x^4 + 4d(c_x c_{d+1})^2$.

\section{GDSC dataset description}

The data were downloaded from the Genomics of Drug Sensitivity 
in Cancer (GDSC) project, release 6.1, March 2017, http://www.cancerrxgene.org/. 
We use gene expression and drug sensitivity data. The gene expression 
dimensionality is reduced to 10 genes used for the actual prediction task, 
based on prior information about their mutation counts in 
cancer (we use the same procedure as \cite{Honkela_2016}). 
The dataset used for 
learning contains 940 cell lines and drug sensitivity data 
for 265 drugs. Some of the values are missing, so the 
actual number of observations varies between the drugs. 
We use a test set of size 100 and the rest of the 
available data for learning.

Since we want to focus on the relative expression of the genes, 
each data point is normalized to have $l_2$-norm of 1. In the 
distributed setting this can be done by each client without 
breaching privacy. After the scaling, we also know that the 
sensitivity of each dimension is at most 1. For the target 
value, we assume a range of [-7.5,7.5] for the marginal 
standard deviation estimation. The true range varies 
between drugs, with the length of all the ranges less 
than 12. In other words, the estimate used adds some 
amount of extra noise to the results.

\section{Asymptotic efficiency of the Gaussian mechanism}

The asymptotic efficiency of the sufficient statistics perturbation using 
Laplace mechanism has been proven before \cite{Foulds2016, Honkela_2016}. 
We show corresponding results for the Gaussian mechanism. The 
proofs generally follow closely those given in \cite{Honkela_2016}. 
For convenience, we state the relevant definitions, but mostly 
focus on those proofs that differ in a non-trivial way 
from the existing ones for the Laplace mechanism. 
For the full proofs and 
related discussion, see \cite{Honkela_2016}.

\subsection{Definition of asymptotic efficiency}

\begin{definition}
  A differentially private mechanism $\mechanism$ is
  \emph{asymptotically consistent with respect to an estimated
  parameter} $\theta$ if the private estimates $\hat{\theta}_{\mechanism}$
  given a data set $\dataset$ converge in probability to the
  corresponding non-private estimates $\hat{\theta}_{NP}$ as the
  number of samples, $n = |\dataset|$, grows without bound, i.e., if for
  any\footnote{We use $\alpha$ in limit expressions instead of usual $\epsilon$ to avoid confusion with $\epsilon$-differential privacy.} $\alpha > 0$,
  $$ \lim\limits_{n \rightarrow \infty}
  \mathrm{Pr}\{\|\hat{\theta}_{\mechanism} - \hat{\theta}_{NP}\| > \alpha\} = 0. $$
\end{definition}

\begin{definition}
  A differentially private mechanism $\mechanism$ is
  \emph{asymptotically efficiently private with respect to an estimated
    parameter} $\theta$, if the mechanism is asymptotically consistent and the
  private estimates $\hat{\theta}_{\mechanism}$ converge to
  the corresponding non-private estimates $\hat{\theta}_{NP}$
  at the rate $\mathcal{O}(1/n)$, i.e., if for any $\alpha > 0$ there exist
  constants $C, N$ such that
  $$ 
  \mathrm{Pr}\{\|\hat{\theta}_{\mechanism} - \hat{\theta}_{NP}\| > C / n \} < \alpha $$
  for all $n \ge N$.
\end{definition}

The first part of Theorem \ref{as_efficiency_GM} follows 
closely the corresponding result for the Laplace 
mechanism \cite[Theorem 1]{Honkela_2016}. The 
theorem shows that the optimal rate for estimating 
the expectation of exponential family distributions 
is $\mathcal{O}(1/n)$. This justifies the term 
asymptotically efficiently private introduced by 
\cite{Honkela_2016}, when we show that sufficient 
statistics perturbation by the Gaussian mechanism 
achieves this rate.

\begin{theorem}
\label{as_efficiency_GM}
  The private estimates $\hat{\theta}_{\mechanism}$ of an exponential
  family posterior expectation parameter $\theta$, generated by a
  differentially private mechanism $\mechanism$ that achieves
  $(\epsilon, \delta)$-differential privacy for any $\epsilon > 0, \delta \in (0,1)$, cannot
  converge to the corresponding non-private estimates
  $\hat{\theta}_{NP}$ at a rate faster than $1/n$.  That is,
  assuming $\mechanism$ is $(\epsilon,\delta)$-differentially private,
  there exists no function $f(n)$ such that
  $\lim\sup n f(n) = 0$ and for all $\alpha > 0$, there exists
  a constant $N$ such that
  $$ \mathrm{Pr}\{\|\hat{\theta}_{\mechanism} - \hat{\theta}_{NP}\| > f(n)
  \} < \alpha $$ for all $n \ge N$.
\end{theorem}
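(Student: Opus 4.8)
The plan is to prove this lower bound by contradiction, exploiting the tension between the $\Theta(1/n)$ sensitivity of the non-private estimator and the distributional closeness that $(\epsilon,\delta)$-DP enforces on adjacent data sets. The argument mirrors the Laplace-mechanism proof of \cite[Theorem 1]{Honkela_2016}; the only genuinely new ingredient is the treatment of the additive $\delta$ term, which is why the statement requires $\delta \in (0,1)$.

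First I would construct a sequence of adjacent pairs $(\dataset_n, \dataset_n')$ of size $n$ on which the non-private estimates are guaranteed to differ by at least $c/n$ for some fixed $c>0$. For an exponential family the non-private posterior expectation parameter $\hat\theta_{NP}$ is a smooth function $g$ of the average sufficient statistic $\bar t = \frac1n\sum_i t(x_i)$, approaching the sample mean of sufficient statistics as $n\to\infty$. Fixing the bulk of the data and letting $\dataset_n, \dataset_n'$ differ only in one sample, with sufficient statistics $t(a)\neq t(b)$, gives $\bar t(\dataset_n) - \bar t(\dataset_n') = (t(a)-t(b))/n$, so that $\hat\theta_{NP}(\dataset_n) - \hat\theta_{NP}(\dataset_n') = Dg\cdot(t(a)-t(b))/n + o(1/n)$. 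Choosing $a,b$ so that the leading term is nonzero yields $\|\hat\theta_{NP}(\dataset_n) - \hat\theta_{NP}(\dataset_n')\| \ge c/n$ for all large $n$.

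Next, suppose for contradiction that some $f(n)$ with $\limsup_n n f(n) = 0$ satisfies the stated high-probability convergence, which in particular must hold for the sequences $\dataset_n$ and $\dataset_n'$. Fix the single value $\alpha^* = (1-\delta)/\bigl(2(1+e^\epsilon)\bigr)$, a positive constant since $\delta<1$, and let $N$ be the corresponding threshold. For each $n$ define the ball $\mc S_n = \{z : \|z - \hat\theta_{NP}(\dataset_n)\| \le f(n)\}$ centred at the non-private estimate on $\dataset_n$. Evaluated on $\dataset_n$, the mechanism lands in $\mc S_n$ with probability exceeding $1-\alpha^*$. Because $f(n)=o(1/n)$, for $n$ large enough that $2f(n) < c/n$ the triangle inequality shows that landing in $\mc S_n$ forces a deviation from $\hat\theta_{NP}(\dataset_n')$ larger than $f(n)$; hence evaluated on $\dataset_n'$ the mechanism lands in $\mc S_n$ with probability below $\alpha^*$.

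Finally I would invoke the $(\epsilon,\delta)$-DP inequality of Definition \ref{def:DP} with $\mc S = \mc S_n$, giving $1-\alpha^* \le e^\epsilon \alpha^* + \delta$, i.e. $1-\delta \le (1+e^\epsilon)\alpha^* = (1-\delta)/2$, a contradiction. I expect the main obstacle to be the first step: establishing a genuine $\Omega(1/n)$ lower bound (not merely an upper bound) on the separation of the non-private estimates, which requires verifying that the map from average sufficient statistic to posterior expectation is non-degenerate so that $Dg\cdot(t(a)-t(b))\neq 0$. The remaining separation argument is routine, and the passage from $\epsilon$-DP to $(\epsilon,\delta)$-DP is absorbed entirely by retaining $\delta<1$ in the final inequality.
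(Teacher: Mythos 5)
Your proposal is correct and follows essentially the same route as the paper's proof: an $\Omega(1/n)$ separation of the non-private estimates on adjacent data sets, a ball of radius $f(n)$ around one of them, and the $(\epsilon,\delta)$-DP inequality yielding $1-\alpha < e^\epsilon\alpha + \delta$, which is incompatible with $\delta < 1$ for small enough $\alpha$ (the paper lets $\alpha \to 0$ where you fix one $\alpha^*$; both work). The one obstacle you flag --- verifying non-degeneracy of the map from average sufficient statistics to the posterior expectation --- does not arise in the paper, because there $\hat{\theta}_{NP}$ is the exact conjugate-posterior expression $(n_0 x_0 + \sum_{i=1}^n x_i)/(n_0+n)$, linear in the data, so the separation on adjacent data sets is exactly $\|x-y\|/(n_0+n)$ with no Taylor expansion needed.
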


\begin{proof}

  The non-private estimate of an expectation parameter of an
  exponential family is~\cite{Diaconis1979}
  \begin{equation}
    \label{eq:exp_parameter}
    \hat{\theta}_{NP} | x_1, \dots, x_n = \frac{n_0 x_0 + \sum_{i=1}^n x_i}{n_0 + n}.
  \end{equation}
  The difference of the estimates from two neighbouring data sets
  differing by one element is
  \begin{equation}
    \label{eq:est_diff}
    (\hat{\theta}_{NP} | \dataset) - (\hat{\theta}_{NP} | \dataset')
    = \frac{x - y}{n_0 + n},
  \end{equation}
  where $x$ and $y$ are the corresponding mismatched elements.
  Let $\Delta = \max(\| x - y \|)$, and let $\dataset$ and $\dataset'$
  be neighbouring data sets including these maximally different
  elements.

  Let us assume that there exists a function $f(n)$ such that
  $\lim\sup n f(n) = 0$ and for all $\alpha > 0$ there exists
  a constant $N$ such that
  $$ \mathrm{Pr}\{\|\hat{\theta}_{\mechanism} - \hat{\theta}_{NP}\| > f(n)
  \} < \alpha $$ for all $n \ge N$.

  Fix $\alpha > 0$ and choose $M \ge \max(N, n_0)$ such that $$f(n)
  \le \Delta / 4n$$ for all $n \ge M$. This implies that
  \begin{equation}
    \label{eq:estimatediff}
    \| (\hat{\theta}_{NP} | \dataset) - (\hat{\theta}_{NP} | \dataset') \|
    = \frac{\Delta}{n_0 + n} \ge \frac{\Delta}{2n} \ge 2 f(n).
  \end{equation}

  Let us define the region $$C_\dataset = \{ t \;|\; \| (\hat{\theta}_{NP} | \dataset) - t \| < f(n) \}. $$
  
  Based on our assumptions we have
  \begin{align}
    \mathrm{Pr}(\hat{\theta}_{\mechanism} | \dataset \in C_\dataset) > & 1-\alpha \\
    \mathrm{Pr}(\hat{\theta}_{\mechanism} | \dataset' \in C_\dataset) < & \alpha \\
    \mathrm{Pr}(\hat{\theta}_{\mechanism} | \dataset \in C_\dataset) \leq &  \exp(\epsilon) \mathrm{Pr}(\hat{\theta} _{\mechanism} | \dataset' \in C_\dataset) +\delta
  \end{align}
  which implies that
  \begin{align}
    \label{eq:dp_test}
    1-\alpha < & \exp(\epsilon) \alpha + \delta \\
    \Leftrightarrow \ \delta > & 1- (1+\exp(\epsilon)) \alpha.
  \end{align}
  Since for fixed $\epsilon$, $\lim_{\alpha \rightarrow 0} 1- (1+\exp(\epsilon)) \alpha = 1$, $\mechanism$ 
  cannot be $(\epsilon, \delta)$-differentially private with any $\epsilon$ and $\delta < 1$.
\end{proof}

Before the next theorem, we prove Lemma~\ref{lemma:tail_of_norm}, 
which is not used in \cite{Honkela_2016}.

\begin{lemma}
\label{lemma:tail_of_norm}
  Let $x \in \R^d$, $x \sim N(0, \sigma^2 I)$.  The tail probability
  of the $\ell_1$ norm of $x$ obeys
  \begin{equation}
    \label{eq:l1_tailprob}
    \mathrm{Pr}( \| x \|_1 \ge t) \le \frac{d \sigma^2}{\left( t - \sqrt{2/\pi} d \sigma \right)^2} \left( 1 - \frac{2}{\pi} \right).
  \end{equation}
\end{lemma}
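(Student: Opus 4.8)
The plan is to recognise that the right-hand side of \eqref{eq:l1_tailprob} is precisely a Chebyshev bound, and then to reduce the problem to computing the first two moments of the summands. Writing $\|x\|_1 = \sum_{i=1}^d |x_i|$, where the $x_i \sim N(0,\sigma^2)$ are independent, each $|x_i|$ is a folded (half-)normal random variable. First I would compute its mean and variance: since $\E|x_i| = \sigma\sqrt{2/\pi}$ and $\E x_i^2 = \sigma^2$, we get $\Var(|x_i|) = \sigma^2 - \tfrac{2}{\pi}\sigma^2 = \sigma^2\left(1 - \tfrac{2}{\pi}\right)$. These are the two elementary facts on which everything rests.

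Next I would use independence of the coordinates to aggregate these moments. By linearity of expectation, $\mu := \E\|x\|_1 = d\sigma\sqrt{2/\pi}$, and because the $|x_i|$ are independent their variances add, giving $\Var(\|x\|_1) = d\sigma^2\left(1 - \tfrac{2}{\pi}\right)$. Note that the shift $\sqrt{2/\pi}\,d\sigma$ appearing in the denominator of \eqref{eq:l1_tailprob} is exactly this mean $\mu$, and the numerator is exactly this variance; this is the structural observation that pins down the argument.

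Finally I would apply Chebyshev's inequality. For any $t > \mu$, the upper tail satisfies $\mathrm{Pr}(\|x\|_1 \ge t) \le \mathrm{Pr}(|\,\|x\|_1 - \mu\,| \ge t - \mu) \le \Var(\|x\|_1)/(t-\mu)^2$, which is precisely the claimed bound. The only point requiring a word of care is the regime of validity: the bound is meaningful (and the denominator positive) exactly when $t > \sqrt{2/\pi}\,d\sigma = \mu$, which is the same condition needed for Chebyshev to be applied to the upper tail; for $t \le \mu$ the stated inequality is vacuous. I do not anticipate a genuine obstacle here—the entire content is the half-normal moment computation plus the recognition of the Chebyshev form—so the main thing to get right is the variance $\sigma^2(1-2/\pi)$ of a single folded Gaussian, since an error there would propagate directly into the constant factor $(1 - 2/\pi)$ of the final bound.
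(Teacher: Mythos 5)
Your proposal is correct and follows essentially the same route as the paper's own proof: decompose $\|x\|_1$ into a sum of independent half-normal variables, compute the mean $\sqrt{2/\pi}\,d\sigma$ and variance $d\sigma^2(1-2/\pi)$, and apply Chebyshev's inequality to the deviation $t-\mu$. Your remark on the regime of validity ($t>\sqrt{2/\pi}\,d\sigma$) is a small but welcome addition that the paper leaves implicit.
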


\begin{proof}
  $ \| x \|_1 = \sum_{i=1}^d |x_i| = \sum_{i=1}^d y_i$, where
  $x_i \sim N(0, \sigma^2)$ and $y_i$ follows the half-normal
  distribution with variance $\sigma^2$.

  It is known that $\E[y_i] = \sqrt{2/\pi} \sigma$ and
  $\Var[y_i] = \sigma^2 (1 - 2/\pi)$.

  Because $y_i$ are independent,
  $\E[ \| x \|_1 ] = d \E[y_i] = \sqrt{2/\pi} d \sigma$ and
  $\Var[ \| x \|_1 ] = d \Var[y_i] = d \sigma^2 (1 - 2/\pi)$.

  Setting $a = t - \sqrt{2/\pi} d \sigma$ we have
  \begin{align*}
    \mathrm{Pr}( \| x \|_1 \ge t )
    &= \mathrm{Pr}\left( \| x \|_1 \ge a + \sqrt{2/\pi} d \sigma \right) \\
    &\le
      \mathrm{Pr}\left( \left| \| x \|_1 - \sqrt{2/\pi} d \sigma \right| \ge a \right) \\
    &\le
    \frac{d \sigma^2}{\left( t - \sqrt{2/\pi} d \sigma \right)^2} \left( 1 - \frac{2}{\pi} \right).
  \end{align*}
  where the last inequality follows from Chebyshev's inequality.
\end{proof}

\subsubsection{Asymptotic efficiency of Gaussian means}

Theorem~\ref{thm:rate_gaussian_mean}, showing 
one case of asymptotic efficiency of the Gaussian mechanism, 
corresponds to \cite[Theorem 5]{Honkela_2016}, 
although the proof is somewhat different.

\begin{theorem}\label{thm:rate_gaussian_mean}
  $(\epsilon, \delta)$-differentially private estimate of the mean of a
  $d$-dimensional Gaussian variable $x$ bounded by $\|x_i\|_1 \le B$
  in which the Gaussian mechanism is used to perturb the sufficient statistics,
  is asymptotically efficiently private.
\end{theorem}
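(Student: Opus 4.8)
The plan is to exploit the fact that perturbing the sufficient statistic of a Gaussian mean with the Gaussian mechanism introduces noise whose variance does \emph{not} grow with $n$, so that the private--nonprivate gap shrinks at rate $1/n$ purely through the posterior normalisation. First I would write the non-private expectation-parameter estimate as in \eqref{eq:exp_parameter}, $\hat{\theta}_{NP} = (n_0 x_0 + \sum_{i=1}^n x_i)/(n_0 + n)$, and observe that the Gaussian-mechanism estimate perturbs the sufficient statistic $\sum_i x_i$ by a noise vector $\veta \sim N(0, \sigma^2 I)$, giving $\hat{\theta}_{\mechanism} = (n_0 x_0 + \sum_i x_i + \veta)/(n_0 + n)$. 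Subtracting, the entire difference collapses to $\hat{\theta}_{\mechanism} - \hat{\theta}_{NP} = \veta/(n_0 + n)$.

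The crucial observation is that the $\ell_2$-sensitivity of $f(\dataset) = \sum_i x_i$ is bounded independently of $n$: changing a single sample alters the sum by $x - y$ with $\|x - y\|_2 \le \|x\|_2 + \|y\|_2 \le \|x\|_1 + \|y\|_1 \le 2B$, so $\Delta_2(f) \le 2B$. By the Gaussian mechanism \eqref{eq:gauss_sigma} it therefore suffices to take $\sigma^2 = 2\ln(1.25/\delta)(2B/\epsilon)^2$, a constant that does not depend on $n$. Hence $\veta$ is a fixed Gaussian and the only $n$-dependence in the gap lives in the $1/(n_0 + n)$ prefactor.

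To verify the $\mathcal{O}(1/n)$ rate in the sense of the definition, I would bound the tail of $\|\hat{\theta}_{\mechanism} - \hat{\theta}_{NP}\|$ using Lemma~\ref{lemma:tail_of_norm}. Working with the $\ell_1$ norm (which dominates the $\ell_2$ norm, so the bound transfers to any norm by equivalence on $\R^d$), the event $\{\|\veta/(n_0+n)\|_1 > C/n\}$ equals $\{\|\veta\|_1 > C(n_0+n)/n\}$, and since $(n_0+n)/n \ge 1$ for all $n \ge 1$ this event is contained in $\{\|\veta\|_1 > C\}$. Applying Lemma~\ref{lemma:tail_of_norm} with $t = C$ then gives
\[
\mathrm{Pr}\{\|\hat{\theta}_{\mechanism} - \hat{\theta}_{NP}\|_1 > C/n\}
\le \frac{d\sigma^2\,(1 - 2/\pi)}{\left(C - \sqrt{2/\pi}\, d\sigma\right)^2},
\]
valid for \emph{every} $n$ as soon as $C > \sqrt{2/\pi}\, d\sigma$.

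Finally, because this bound is uniform in $n$ and tends to $0$ as $C \to \infty$, given any $\alpha > 0$ I can choose $C$ large enough that the right-hand side falls below $\alpha$; the bound then holds for all $n \ge 1$, so $N = 1$ suffices and asymptotic efficient privacy follows. The one step needing care --- and the main obstacle --- is making the tail bound \emph{uniform} in $n$ rather than merely pointwise. This is precisely what the constant-sensitivity observation buys us: a noise variance independent of $n$ lets a single choice of $C$ work simultaneously for all $n$, whereas if the sensitivity grew with $n$ the constant $\sigma$ would inflate and the argument would break down.
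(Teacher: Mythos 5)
Your proof is correct and follows essentially the same route as the paper: both reduce $\|\hat{\theta}_{\mechanism} - \hat{\theta}_{NP}\|_1$ to $\|\veta\|_1/(n_0+n)$ (the paper delegates this step to \cite[Theorem 3]{Honkela_2016}) and then apply Lemma~\ref{lemma:tail_of_norm} to pick a constant $C$ with $\mathrm{Pr}\{\|\veta\|_1 > C\} < \alpha$, uniformly in $n$. Your explicit remarks that the sensitivity of the sum is bounded by $2B$ independently of $n$, and that the containment $\{\|\veta\|_1 > C(n_0+n)/n\} \subseteq \{\|\veta\|_1 > C\}$ makes the tail bound uniform so $N=1$ suffices, are details the paper leaves implicit but are consistent with its argument.
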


\begin{proof}
Following \cite[Theorem 3]{Honkela_2016}, it is trivial to 
show that 
$$ \|\mu_{DP} - \mu_{NP}\|_1 \le \frac{c}{n} \| \delta \|_1, $$
where $\delta = (\delta_1, \dots, \delta_d)^T \in \R^D$ with $\delta_j \sim \mathrm{N}\left(0,
\sigma_j^2 \right)$ holds when we utilize the Gaussian 
mechanism instead of the Laplace mechanism. This allows us 
to bound the corresponding tail probabilities by 
using Lemma~\ref{lemma:tail_of_norm}.

Therefore, given $\alpha > 0$, we can guarantee that
\begin{align}
  \label{eq:gaussmean_prob}
  \mathrm{Pr}\left\{\|\mu_{DP} - \mu_{NP}\|_1 > \frac{C}{n} \right\}
  \le & \mathrm{Pr}\left\{ \frac{1}{n} \|\delta\|_1 > \frac{C}{n} \right\} \\ 
  = & \mathrm{Pr}\{ \|\delta\|_1 > C \}
  < \alpha, 
\end{align}
by choosing $C$ according to Lemma~\ref{lemma:tail_of_norm}.
\end{proof}

\subsection{Asymptotic efficiency of DP linear regression}

Theorem~\ref{thm:asymptoticly_efficiently_private} that 
establishes asymptotic efficiency for DP linear regression using the 
Gaussian mechanism, for the most part follows 
\cite[Theorem 8]{Honkela_2016}. We concentrate 
here more closely only on the differing parts.

\begin{theorem}
\label{thm:asymptoticly_efficiently_private}
  $(\epsilon,\delta)$-differentially private inference of the posterior mean of
  the weights of linear regression with the Gaussian mechanism 
  used to perturb the sufficient statistics
  is asymptotically efficiently private.
\end{theorem}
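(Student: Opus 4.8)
The plan is to follow the structure of \cite[Theorem 8]{Honkela_2016}, isolating the steps where the Gaussian mechanism forces a different argument from the Laplace case. First I would write the difference between the private and non-private posterior means explicitly. Setting $A = \lambda_0 I + \lambda n \bar{xx}$ for the non-private precision, the private estimate is $\hat\mu_{DP} = (A + \lambda\eta_{xx})^{-1}\lambda(n\bar{xy} + \eta_{xy})$, where $\eta_{xx},\eta_{xy}$ are the injected Gaussian noise terms. Applying the identity $(A+E)^{-1} = A^{-1} - A^{-1}E(A+E)^{-1}$ with $E = \lambda\eta_{xx}$ and simplifying yields
$$ \hat\mu_{DP} - \hat\mu_{NP} = A^{-1}\lambda\left(\eta_{xy} - \eta_{xx}\,\hat\mu_{DP}\right), $$
which separates the numerator noise $\eta_{xy}$ from the precision-matrix noise $\eta_{xx}$.

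The next step exploits the fact that the precision grows linearly in $n$. Since $\bar{xx}$ converges to $\E[\bold x \bold x^T]$ by the law of large numbers, for large $n$ the smallest eigenvalue of $A$ is on the order of $n$, so $\|A^{-1}\|$ decays like $\mathcal{O}(1/n)$. The noise terms, by contrast, have variance governed by the constant $\ell_2$-sensitivity of the bounded sufficient statistics derived in \eqref{testi}, and so are $\mathcal{O}(1)$; moreover $\hat\mu_{DP}$ is bounded with high probability because it converges to the true weights. Combining these via
$$ \|\hat\mu_{DP} - \hat\mu_{NP}\|_1 \le \|A^{-1}\|\,\lambda\left(\|\eta_{xy}\|_1 + \|\eta_{xx}\|_1\,\|\hat\mu_{DP}\|\right) $$
gives an overall $\mathcal{O}(1/n)$ rate.

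The part genuinely specific to the Gaussian mechanism is controlling the tail probabilities of $\|\eta_{xy}\|_1$ and $\|\eta_{xx}\|_1$. Here I would invoke Lemma~\ref{lemma:tail_of_norm} in place of the Laplace tail bounds used in \cite{Honkela_2016}: for any $\alpha > 0$ it supplies a constant $C$ such that the $\ell_1$-norm of the Gaussian noise exceeds $C$ with probability less than $\alpha$. Feeding this into the bound above lets me conclude that for any $\alpha > 0$ there is a $C$ with $\mathrm{Pr}\{\|\hat\mu_{DP} - \hat\mu_{NP}\| > C/n\} < \alpha$ for all sufficiently large $n$, which is exactly asymptotic efficient privacy.

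I expect the main obstacle to be the precision-matrix perturbation $\eta_{xx}$, which enters the inverse nonlinearly and has no analogue in the Gaussian-mean case of Theorem~\ref{thm:rate_gaussian_mean}, where only the numerator is noisy. The expansion above folds this into the single term $A^{-1}\lambda\eta_{xx}\hat\mu_{DP}$, but making it rigorous requires showing that $A + \lambda\eta_{xx}$ stays invertible and well-conditioned and that $\hat\mu_{DP}$ does not blow up. Both follow because $A$ scales with $n$ while the noise remains $\mathcal{O}(1)$, so the non-private precision eventually dominates; the delicate bookkeeping is in quantifying this domination uniformly enough to turn the high-probability boundedness of $\hat\mu_{DP}$ and of the noise into the clean $C/n$ tail statement.
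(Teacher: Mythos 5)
Your proposal is correct and follows essentially the same route as the paper's proof: both reduce the problem to showing that the error is $\mathcal{O}(1/n)$ times the $\ell_1$ norms of the Gaussian perturbations of $\bar{xy}$ and $\bar{xx}$, and both invoke Lemma~\ref{lemma:tail_of_norm} (applied, for the matrix noise, to the vector of $d(d+1)/2$ unique entries of the symmetric perturbation) to convert this into the required tail bound. The only real difference is the algebraic arrangement: you use the resolvent identity to collapse everything into $\hat\mu_{DP}-\hat\mu_{NP}=A^{-1}\lambda(\eta_{xy}-\eta_{xx}\hat\mu_{DP})$, whereas the paper keeps the two noise sources in separate terms --- one with the perturbed inverse acting on the numerator noise and one difference-of-inverses term acting on the non-private numerator --- and bounds each with probability $\alpha/2$; both arrangements hinge on the same ``for $n$ large enough the non-private precision dominates the $\mathcal{O}(1)$ noise'' step that you correctly identify as the delicate part.
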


\begin{proof}
Following the proof of \cite[Theorem 7]{Honkela_2016} with minimal 
changes we have

\begin{multline}
  \left\|  \mu_{DP} - \mu_{NP} \right\|_1 
    \le \left\| (\Lambda_0 + \Lambda (n\bar{xx} + \Delta))^{-1} \Lambda \delta \right\|_1  \\
    + \Bigg\| \left[ \left(\frac{1}{n}\Lambda_0 + \Lambda \left(\bar{xx} + \frac{1}{n}\Delta\right)\right)^{-1} 
        - \left(\frac{1}{n}\Lambda_0 + \Lambda \bar{xx}\right)^{-1}\right]  \\
       \times \left(\Lambda \bar{xy} + \frac{1}{n}\Lambda_0 \beta_0 \right) \Bigg\|_1
  \label{eq:lin_regression_errorbound}
\end{multline}

where $\Delta$ is the noise contribution from the Gaussian mechanism 
added to the sufficient statistics $\bar{xx}$ (see Section \ref{sec:BLR} in this supplement).

As in \cite[Theorem 7]{Honkela_2016}, the first term can be bounded as
\begin{align}
  \left\| (\Lambda_0 + \Lambda (n\bar{xx} + \Delta))^{-1} \Lambda \delta \right\|_1
  &\le \frac{c_1}{n} \left\| (\bar{xx})^{-1} \right\|_1 \| \delta \|_1
  \label{eq:linreg_bound1}
\end{align}
where $c_1 > 1$, for large enough $n$.

As done in the proof of Theorem~\ref{thm:rate_gaussian_mean}, 
given $\alpha > 0$, Lemma~\ref{lemma:tail_of_norm} can be used 
to ensure that

\begin{equation}
  \label{eq:linreg_prob1}
  \mathrm{Pr}\left\{\| (\Lambda_0 + \Lambda (n\bar{xx} + \Delta))^{-1} \Lambda \delta \|_1 > \frac{C_1}{n} \right\} 
  < \frac{\alpha}{2},
\end{equation}
by choosing a suitable $C_1$.

Again, following \cite[Theorem 7]{Honkela_2016}, the second term can be bounded as
\begin{multline*}
  \Bigg\|  \left[ \left(\frac{1}{n}\Lambda_0 + \Lambda \left(\bar{xx} + \frac{1}{n}\Delta\right)\right)^{-1}
      - \left(\frac{1}{n}\Lambda_0 + \Lambda \bar{xx}\right)^{-1}\right] \\
       \times \left(\Lambda \bar{xy} + \frac{1}{n}\Lambda_0 \beta_0\right) \Bigg\|_1 \\
  \le  \frac{c_2}{n} \left\| \left(\bar{xx}\right)^{-1} \right\|_1
  \left\| \Delta \right\|_1
  \left\| \left(\bar{xx}\right)^{-1} \right\|_1
  \left\| \bar{xy} \right\|_1,
\end{multline*}
where, as in Eq.~(\ref{eq:linreg_bound1}), the bound is valid
for $c_2>1$ as $n$ gets large enough.

$\|\Delta\|_1$ here is the $l_1$-norm of the symmetric matrix $\Delta$, 
that is comprised of a vector of $d(d+1)/2$ unique noise terms, each 
generated independently from a Normal distribution according to 
the Gaussian mechanism. Denoting this vector by $\bold v$, a 
bound to the matrix norm is given by $\|\Delta\|_1 \leq \| \bold v \|_1$.

Therefore, given $\alpha > 0$, we can again use Lemma~\ref{lemma:tail_of_norm} 
to find a suitable $C_2$ s.t.

\begin{multline}
  \label{eq:linreg_prob2}
   \mathrm{Pr}\left\{ \left\| \Delta \right\|_1 > \frac{C_2}{c_2 \left\| \left(\bar{xx}\right)^{-1} \right\|_1^2 \left\| \bar{xy} \right\|_1} \right\}   \\
   \leq 
  \mathrm{Pr}\left\{ \left\| \bold v \right\|_1 > \frac{C_2}{c_2 \left\| \left(\bar{xx}\right)^{-1} \right\|_1^2 \left\| \bar{xy} \right\|_1} \right\} < \frac{\alpha}{2}.
\end{multline}

By combining Eqs.~(\ref{eq:linreg_prob1}) and (\ref{eq:linreg_prob2})
we get
\begin{equation}
  \label{eq:linreg_efficiency}
  \mathrm{Pr}\left\{ \left\| \mu_{DP} - \mu_{NP} \right\|_1 > \frac{C_1 + C_2}{n} \right\} < \alpha.
\end{equation}
\end{proof}

\end{document}